\newcommand{\us}{{\sc GFSE}}
\newenvironment{newlemma}[1]
  {\lemma}
  {\endlemma}
\newcommand{\shorten}[1]{}
\newcommand{\bi}{\begin{list}{$\bullet$}{
    \setlength{\leftmargin}{1.5 em}
    \setlength{\itemsep}{0 pt}
    \setlength{\topsep}{3 pt}
    \setlength{\parsep}{3 pt}
    \setlength{\partopsep}{0 pt}
    \setlength{\labelwidth}{1 em}
    \setlength{\labelsep}{0.5 em}
    \setlength{\parskip}{0cm}  }}
\newcommand{\ei}{\end{list}}
\newcommand{\BE}{\begin{enumerate}}
\newcommand{\EE}{\end{enumerate}}
\newtheorem{theorem}{Theorem}
\newcommand{\initab}{                           
\begin{tabbing}
XXX \= XXXX \= \kill
}
\newcommand{\begpub}{
\begin{quotation}
\noindent
}
\newcommand{\finpub}{
\end{quotation}
}
\begin{document}

\title{Sample Efficient Policy Search for Optimal Stopping Domains}
\author{Karan Goel \\
Carnegie Mellon University\\
{\tt kgoel93@gmail.com} 
\And Christoph Dann\\
Carnegie Mellon University\\
{\tt cdann@cdann.net}
%
\And 
Emma Brunskill\\
Stanford University\\
{\tt ebrun@cs.stanford.edu}
}
 \maketitle
\begin{abstract}
  Optimal stopping problems consider the question of deciding when to stop an observation-generating process in order to maximize a return. We examine the problem of simultaneously learning and planning in such domains, when data is collected directly from the environment. We propose \us, a simple and flexible model-free policy search method that reuses data for sample efficiency by leveraging problem structure. We bound the sample complexity of our approach to guarantee uniform convergence of policy value estimates, tightening existing PAC bounds to achieve logarithmic dependence on horizon length for our setting. We also examine the benefit of our method against prevalent model-based and model-free approaches on 3 domains taken from diverse fields.
 \end{abstract}

\section{Introduction}
Sequential decision making and learning in unknown environments, 
commonly modeled as reinforcement learning (RL), is a key aspect of 
artificial intelligence. An important subclass of RL is  optimal 
stopping processes, where an agent decides at each step whether to continue or terminate a stochastic process and the reward upon termination is a function of the observations seen so far. Many common problems in Computer Science and Operations Research can be modeled within this setting, including the secretary problem \cite{ferguson1989solved}, house selling \cite{glower1998selling,lippman1976economics}, American options trading \cite{jacka1991optimal,mordecki2002optimal}, product pricing \cite{feng1995optimal} and asset replacement \cite{DBLP:journals/ior/JiangP15}, as well as problems in 
artificial intelligence like mission monitoring robots \cite{best2015spatiotemporal}, metareasoning about the value of additional computation \cite{zilberstein1995operational} and 
automatically deciding when to purchase an airline ticket \cite{DBLP:conf/kdd/EtzioniTKY03}.  Often the stopping process dynamics are unknown in advance and so finding a good stopping policy (when to halt) requires learning from experience in the environment. As real experience can incur real losses, we desire algorithms that can quickly (with minimal samples) learn good policies that achieve high reward for these problems.

Interestingly, most prior work on optimal stopping has focused on the planning problem: how to compute near-optimal policies given access to the dynamics and reward of the stochastic stopping process \cite{peskir2006optimal}.  Optimal stopping problems can also be framed as a partially observable Markov decision process (POMDP), and there also exists work on learning a good policy for acting in POMDPs, that bounds the number of samples  required to identify the near optimal policy out of a class of policies~\cite{DBLP:conf/nips/KearnsMN99,DBLP:conf/uai/NgJ00}. However,  
such work either (i) makes the strong assumption that the algorithm has access 
to a generative model (ability to simulate from any state) of the stochastic process, which makes this 
work  more  suited to improving the efficiency of planning using simulations of the domain, or (ii) can use trajectories directly collected from the environment, but incurs exponential horizon dependence.

In this paper, we consider how to quickly learn a near-optimal policy in a stochastic optimal stopping process with unknown dynamics, given an input class of policies. 
We assume there is a fixed maximum length horizon for acting,  and  then 
make a  simple but powerful observation: for stopping problems with process-dependent rewards, the outcomes of a full length trajectory (that is,  a trajectory in which the policy only halts after the entire horizon) provide an estimated return for halting after one step, after two steps, and so on, till the horizon. In this way, a single full-length trajectory yields a sample return for any stopping policy. Based on this, we propose an algorithm that first  acts by  stopping only after the full length horizon for a number of trajectories, and then performs policy search  over an input policy class, where the full length trajectories are used to provide estimates of the  expected return of each policy considered in the policy class. The policy in the set with the highest expected performance  is selected for future use.  We provide sample complexity bounds on the number of full length trajectories sufficient to identify a near optimal policy within the input policy class.  Our results are similar to more general results for POMDPs~\cite{DBLP:conf/nips/KearnsMN99,DBLP:conf/uai/NgJ00}, but due to the structure of optimal stopping we achieve two key benefits: our bounds' dependence on the horizon is only logarithmic instead of linear (with a generative model) and exponential (without), and our results apply to learning in stochastic stopping processes, with no generative model required.  Simulation results on student tutoring, ticket purchase, and asset replacement show our approach significantly improves over  state-of-the-art approaches. 

\section{Problem Formulation}

 We consider the standard stochastic discrete-time optimal stopping process setting. As in Tsitsiklis and Van Roy [\citeyear{tsitsiklis1999optimal}], we assume there is a stochastic process $\mathrm{P}$ that generates observations $o_1,\ldots,o_t$ (they may be vectors). There are two actions: halt or continue the process. The reward model is a known, deterministic function of the sequence of observations and the choice of whether to continue or halt. While there do exist domains where the reward model can be a nondeterministic function of the observations and the actions (such as a medical procedure that reveals the patient's true condition after a sequence of waiting), most common optimal stopping problems fall within the framework considered here, including the secretary problem (the quality of each secretary is directly observed), house selling (the price for the house from each bidder is known), asset replacement (published guides on the worth of an asset, plus knowledge of the cost of buying a new one), etc. We focus on the episodic setting where there is a fixed maximum time horizon for each process. 
The finite horizon value of a policy $V_\pi = \mathrm{E}_{\mathrm{P}}[ r |\pi]$ is the expected return from following $\pi$ over a horizon of $H$ steps, where the expectation is taken over the stochastic process dynamics $\mathrm{P}$. Note the policy may choose to halt before $H$ steps. The goal is to maximize return across episodes.




We focus here on direct policy search methods (see e.g. \cite{DBLP:conf/nips/SuttonMSM99}). More precisely, we assume as input a parameterized policy class $\Pi = \{f(\theta)|\theta \in \Theta\}$ where $\Theta$ is the set of policy parameters. Direct policy search does not require building a model of the domain, and has been very successful in a variety of reinforcement learning (RL) contexts \cite{deisenroth2011pilco,levine2014learning}.

\section{Sample Efficient Policy Search}
 We are particularly interested in domains where evaluation of a 
policy incurs real cost in the environment, 
such as stock market options selling. In such settings we 
wish to find sample efficient methods for doing policy 
search, that can minimize the number of poor outcomes 
in the real world. 
The challenge is that we do not know the stochastic dynamics 
$\mathrm{P}$ and so it is not possible to, in advance 
of acting, perform policy search to identify a good policy. 
Instead we can only obtain information about the domain 
dynamics by executing policies in the real world. We seek 
to efficiently leverage such experience to quickly make 
good decisions. 

We now present a simple approach, \us\ (Gather Full, Search and Execute) (Algorithm~\ref{alg:our_approach}), to do sample 
efficient policy search. \us\ collects a 
set of full-length (horizon $H$) 
trajectories, uses these to  
evaluate the performance of any policy in the input 
policy class $\Pi$, identifies a good policy, 
and then executes the resulting policy on all future 
episodes. 

\begin{algorithm}[t]
\SetAlgoLined
Input: policy class $\Pi$, search method $\mathcal{S}$, $\epsilon, \delta$\\
$n \leftarrow$ Use Theorem~\ref{theorem:pac} with $\epsilon,\delta$\\
$\Gamma \leftarrow$ Gather $n$ full trajectories from environment\\
$\pi* \leftarrow$ Identify policy using $\mathcal{S}(\Pi,\Gamma)$ \textbackslash\textbackslash   evaluation uses $\Gamma$\\
Execute $\pi*$
 \caption{Gather Full, Search and Execute (GFSE)}
 \label{alg:our_approach}
\end{algorithm}

The key insight is in the first step, gathering the data 
to be used to evaluate the performance of any policy 
in the policy class $\Pi$. Monte Carlo 
estimation can be used to estimate the expected return of a policy 
by running it many times. However, this scales poorly 
with the cardinality of the policy class. Building a dynamics model 
from a set of data is more efficient, 
as a model can be used to simulate the performance of 
any policy, but this requires us to make certain assumptions about 
the domain (for ex. the Markov property) Which can lead to biased estimates. Alternatively, 
importance sampling can be used to do off-policy evaluation \cite{precup2000eligibility} but unfortunately such estimates tend to be very 
high variance.

However, a simple but powerful observation is that  a full-horizon ($H$-step) trajectory can be used to yield a sample return for all optimal stopping policies in $\Pi$.  Given a full length trajectory $(o_1,\ldots,o_H)$, the performance of a particular policy $\pi$ can be simulated by providing  $(o_1,o_2,\dots)$ to the target policy until it halts at some time step $t \leq H$. 
Therefore we can take the subsequence of observations $(o_1,o_2,\ldots,o_{t})$ and use it to directly compute the return that would have been observed for executing $\pi$ on this trajectory.   A single full-horizon trajectory will provide just one sample of the return of any policy. But a set of $n$ full-horizon trajectories can be used to provide $n$ sample returns for a given policy $\pi$, thereby providing an empirical estimate of $V^{\pi}$. We can do this off policy evaluation of $V^{\pi}$ for any policy in the class $\Pi$.

Prior work has shown that given access to a generative model of the domain, policy search can be done in an efficient way by using common random numbers to evaluate policies that act differently in an episode \cite{DBLP:conf/nips/KearnsMN99,DBLP:conf/uai/NgJ00}. In our setting, a full-horizon trajectory is essentially equivalent to having access to a generative model that can produce a single return for any policy.
  However, access to a full length trajectory can be obtained by running in the environment, whereas generic generative models typically require "teleporation": the ability to simulate what would happen next under a particular action given an arbitrary prior history, which is hard unless in a planning scenario in which one already has knowedge of the dynamics process. Our results require weaker assumptions than prior results that use stronger generative models to obtain similar sample efficiency, while also achieving better sample efficiency than approaches with  access to similar generative models.

We will shortly provide a sufficient condition on the number 
of full length trajectories $n$, to guarantee that 
we can evaluate any policy sufficiently accurately 
to enable policy search to identify a near-optimal policy 
(within the input policy class). Of course, empirically, 
we will often wish to select a smaller $n$: our simulation 
experiments will demonstrate that often a small $n$ still 
enables us to identify a good policy. 



\section{Theoretical Analysis}
We now provide bounds on the sample complexity of 
\us: 
the number of full length trajectories
required to obtain near accurate estimates of all
policies in a policy class. This is sufficient to identify
the optimal (or near-optimal) policy in the policy class with the highest expected return.

First, we note that the optimal stopping problems we consider in this paper can be viewed as a particular instance of a POMDP. Briefly, there is some hidden state space, with a dynamics model that determines how the current state transitions to a new state stochastically, given the continue action. The observation is a function of the hidden state, and the reward is also a function of the hidden state and action. 

Our main result is that, given a policy class $\Pi$,
the sample complexity scales logarithmically with the
horizon. We make no assumption of access to a generative
model. This is a significant improvement over prior
sample complexity results for policy search for
generic POMDPs and large MDPs~\cite{DBLP:conf/nips/KearnsMN99,DBLP:conf/uai/NgJ00} which required access to a generative model of
the environment and had a sample complexity
that scaled linearly with the horizon.
These results can be thought of as bounding the
computation/simulation time required during \textit{planning},
when one has access to a generative model that can
be used to sample an outcome (reward, observation)
given any prior history and action. In contrast,
our results apply during \textit{learning}, where the agent
has no generative model of the domain, but must
instead explore to observe different outcomes. Without a generative model of the domain, sample complexity results for policy search in generic POMDPs when learning scale exponentially with the horizon \cite{DBLP:conf/nips/KearnsMN99}.

Optimal stopping trajectories are 
related to the trajectory trees of \citeauthor{DBLP:conf/nips/KearnsMN99}
which were used to evaluate the
returns of different POMDP policies.
For a POMDP with $2$ actions, each trajectory tree is a complete
binary tree (of depth $H$) rooted at a start state. Nodes in the
tree are labeled with a state and observation, and a path from
the root to any node in the tree denotes a series of actions taken
by a policy. A trajectory tree can be used
to evaluate any policy in $\Pi$, since every action sequence
is part of the tree.
However, while for generic POMDPs the size of a trajectory tree
is exponential in the horizon, for optimal stopping problems the tree size
is linear in the horizon (Figure~\ref{fig:full_traj}).
This allows us to obtain significantly tighter dependence
on $H$ than for generic POMDPs.

Our analysis closely follows the 
prior sample complexity
results of \cite{DBLP:conf/nips/KearnsMN99}.
\citeauthor{DBLP:conf/nips/KearnsMN99} proceeded
by first considering a bound on the VC-dimension of
$\Pi$ when viewed as a set of real-valued mappings from
histories to returns, as a function of
 the VC-dimension of $\Pi$ when viewed as mappings of
histories to actions. Then they use this result to
bound the sample complexity needed to get near-accurate
estimates of the returns of all policies in the policy
class.

We will follow a similar procedure to bound
the sample complexity when $\Pi$ contains
a potentially infinite number of deterministic policies.\footnote{
Similar to Kearns et al.~(\citeyear{DBLP:conf/nips/KearnsMN99}) our results extend to
finite $\Pi$ and infinite, stochastic $\Pi$, as well as the discounted
infinite-horizon case (using an $\epsilon$-approximation to $V_\pi$
with horizon $H_\epsilon$).}
Let $d = \mathrm{VC}(\Pi)$ be the VC-dimension of
our policy class. This is well-defined, since each optimal-stopping
policy maps trajectories to 2 actions (binary labeling). Let
$\mathrm{VC}_r(\Pi)$ be the VC-dimension of $\Pi$ when viewed as a
set of real-valued mappings from full trajectories to returns and assume $V_\pi$ is bounded by $V_{\mathrm{max}}$. From
\cite{vapnik1982estimation}, we know that $\mathrm{VC}_r(\Pi)$
can be computed as $\mathrm{VC}(\mathcal{H})$,
 where $\mathcal{H} = \left\{\mathrm{I}(\pi,\Delta,\cdot) | \pi \in \Pi, \Delta \in [-V_{\max},V_{\max}] \right\}$
with $\mathrm{I}(\pi,\Delta,x) = 1$ if $\pi(x) \ge \Delta$, and
$0$ otherwise ($\pi(x)$ is the return for full trajectory $x$ under $\pi$).

\begin{newlemma}{1}
Let $\Pi$ be a set of deterministic optimal-stopping policies with
VC-dimension $d$ when viewed as a set of maps from trajectories to
actions. Then, when viewed as a set of maps from the space of all
full trajectories to $[-V_\mathrm{max},V_\mathrm{max}]$, $\Pi$ has
dimension bounded by,
\begin{equation*}
\mathrm{VC}_r (\Pi) = \mathrm{O}(d \log H)
\end{equation*}
\end{newlemma}

\begin{proof}
Our proof proceeds similarly to Lemma A.1 in
\citeauthor{DBLP:conf/nips/KearnsMN99}.
The crucial difference is
that our policies operate on a full-trajectory structure that
contains $H$ nodes (Figure~\ref{fig:full_traj}), rather
than \citeauthor{DBLP:conf/nips/KearnsMN99}'s trajectory trees with $2^{H+1}$ nodes. In our setting at each point the agent only gets to consider whether to halt or continue, and if the halt action is chosen, the trajectory terminates. This implies that in contrast to standard expectimax trees where the size of the tree depends on the action space as an exponential of the horizon, $|A|^{H}$, in our setting the dependence induced by the actions is only linear in $H$. Thus $\Pi$ can produce a much smaller set of behaviors, and our dependence on
$H$ is logarithmic, rather than polynomial.

More formally, by Sauer's lemma, $k$ trajectories can be labeled in atmost $(\frac{ek}{d})^d$ ways by $\Pi$. First note that $n$ \emph{full} trajectories contain at most $nH$ distinct trajectories across them (one per node; refer to Figure~\ref{fig:full_traj} for the structure of full trajectories). Each action labeling of these $k = nH$ trajectories by $\Pi$, corresponds to \emph{selecting} $n$ paths (1 path per full trajectory), where each path starts at the first observation and ends at a terminal node. The number of possible selections by $\Pi$ is thus atmost $(\frac{enH}{d})^d$. Each path can be viewed as mapping a full trajectory to a return, and a selection therefore maps the $n$ full trajectories to $n$ real-valued returns.

\begin{figure}[t]
    \centering
     \includegraphics[width=0.9\linewidth]{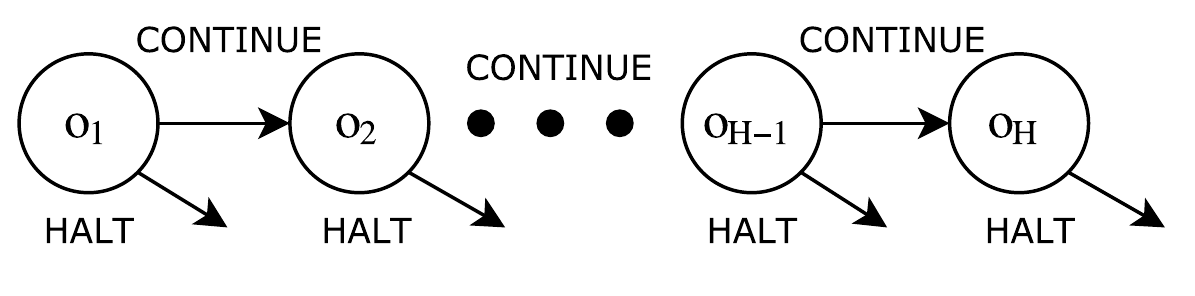}
    \caption{The structure of a full trajectory for horizon $H$. Each node represents an observation while arrows represent one of the two available actions.}
    \label{fig:full_traj}
\end{figure}

There are $nH$ terminal nodes across the $n$ full trajectories. Thus there are at most $nH$ distinct real-valued returns on the $n$ full trajectories under $\Pi$. If we set the indicator threshold $\Delta$ to equal each of these $nH$ returns in turn, there would be atmost $(\frac{enH}{d})^d$ distinct binary labelings of the full trajectories for each such $\Delta$. Thus, the set of indicator functions that define $\mathrm{VC}_r(\Pi)$ can generate atmost $nH(\frac{enH}{d})^d$ distinct labelings on $n$ full trajectories. To shatter the $n$ full trajectories, we set $nH(\frac{enH}{d})^d \ge 2^n$, and the result follows.
\end{proof}

We now proceed similarly to Theorem 3.2 in \citeauthor{DBLP:conf/nips/KearnsMN99}

\begin{theorem}
\label{theorem:pac}
Let $\Pi$ be a potentially infinite set of deterministic optimal stopping policies and let $d$ be the VC-dimension of $\Pi$. Let $\Gamma$ be $n$ full trajectories collected from the environment, and let $\hat{V_\pi}$ be the value estimates for $\pi \in \Pi$ using $\Gamma$. Let the return $V_\pi$ be bounded by $V_\mathrm{max}$ for any trajectory. If
\begin{equation*}
n = \mathrm{O}\left( \left(\frac{V_{\mathrm{max}}}{\epsilon}\right)^2 \left(d \log H + \log \frac{1}{\delta} \right)  \right)
 \end{equation*}
then with probability at least $1 - \delta$, $|V_\pi - \hat{V}_\pi | \le \epsilon$ holds simultaneously for all $\pi \in \Pi$.
\end{theorem}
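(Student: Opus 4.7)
The plan is to reduce the theorem to a standard uniform convergence bound for real-valued function classes with bounded pseudo-dimension, where the required pseudo-dimension bound has already been supplied by Lemma~1.

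First I would set up the statistical learning view. For each policy $\pi \in \Pi$, view $\pi$ as the real-valued map $x \mapsto \pi(x)$ sending a full trajectory $x = (o_1,\ldots,o_H)$ to the return obtained by simulating $\pi$ on $x$ (halting at whatever step $\pi$ picks). Because the process dynamics do not depend on the stop/continue action, running $\pi$ against a truncated prefix of an $H$-step trajectory drawn from $\mathrm{P}$ yields a sample with the same distribution as executing $\pi$ on a fresh episode; hence $\mathrm{E}[\pi(X)] = V_\pi$ and the empirical average over the $n$ trajectories in $\Gamma$ is exactly $\hat V_\pi$. Each $\pi(X)$ takes values in $[-V_{\max},V_{\max}]$ by assumption.

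Next I would invoke the classical uniform convergence theorem for bounded real-valued function classes (Pollard / Haussler style): if $\mathcal{F}$ is a family of functions into $[-V_{\max},V_{\max}]$ with pseudo-dimension $D$, then for i.i.d.\ samples $X_1,\ldots,X_n$,
\begin{equation*}
\Pr\!\left[\sup_{f \in \mathcal{F}} \left| \tfrac{1}{n}\sum_{i=1}^n f(X_i) - \mathrm{E}[f(X)] \right| > \epsilon \right] \le \delta
\end{equation*}
whenever $n = \mathrm{O}\!\left( (V_{\max}/\epsilon)^2 \bigl(D + \log(1/\delta)\bigr)\right)$. Applying this with $\mathcal{F} = \Pi$ (in its real-valued incarnation) and plugging in $D = \mathrm{VC}_r(\Pi) = \mathrm{O}(d \log H)$ from Lemma~1 immediately yields the stated bound.

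The main obstacle is not an inequality manipulation but the subtle modeling step that licenses treating $\pi(X)$ as an unbiased sample of $V_\pi$. One has to be careful that, because the process $\mathrm{P}$ generates observations independently of the halt/continue action (this is precisely the structural feature of optimal-stopping problems), the restriction of a full $H$-trajectory to the prefix selected by $\pi$ is distributed exactly as the trajectory one would observe by executing $\pi$ from scratch; without this observation the average of $\pi(X_i)$ would not concentrate around $V_\pi$. Once that is argued cleanly, the rest is a direct substitution of Lemma~1 into an off-the-shelf pseudo-dimension uniform-convergence theorem, and the final bound follows. A brief remark that the extension to stochastic $\pi$ and to the discounted infinite-horizon case reduces to this deterministic finite-horizon statement (as promised in the footnote) closes out the argument.
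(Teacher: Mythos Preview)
Your proposal is correct and follows essentially the same route as the paper: view each $\pi$ as a bounded real-valued map on full trajectories, invoke a standard uniform-convergence bound for real-valued classes in terms of $\mathrm{VC}_r(\Pi)$, and substitute Lemma~1. The paper cites Vapnik's version of the bound (which carries an extra $\log(n/\mathrm{VC}_r(\Pi))$ factor that is then absorbed) whereas you cite the Pollard/Haussler form directly, and you spell out more carefully why $\mathrm{E}[\pi(X)]=V_\pi$; these are cosmetic differences, not a different argument.
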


\begin{proof}
Let $X$ be the space of full trajectories. Every policy $\pi \in \Pi$ is a bounded real-valued map $\pi: X \to [-V_\mathrm{max},V_\mathrm{max}]$. Let $x_1,\dots,x_n \sim \mathrm{P}$ be i.i.d. full trajectories generated by the environment dynamics. Using a result of \cite{vapnik1982estimation}, we have with probability $1 - \delta$, $\sup_{\pi \in \Pi} \left|\mathrm{E_P}[\pi(x)] - \frac{1}{n}\sum_{i=1}^n \pi(x_i)\right| \le \mathrm{O}\left(V_\mathrm{max} \sqrt{ \frac{\mathrm{VC}_r(\Pi) \log \frac{n}{\mathrm{VC}_r(\Pi)} + \log \frac{1}{\delta}}{n} }\right)$. Substitute $\mathrm{E_P}[\pi(x)] = V_\pi$, $\frac{1}{n}\sum_{i=1}^n \pi(x_i) = \hat{V}_\pi$, $\mathrm{VC}_r(\Pi) = \mathrm{O}(d \log H)$ in the inequality and upper-bound by $\epsilon$ to get the result.
\end{proof}
In practice it may be impossible for us to evaluate every policy in $\Pi$, and then select the one with the best estimated mean. In such cases, we can use a different search method ($\mathcal{S}$ in Algorithm~\ref{alg:our_approach}) to find a local optima in $\Pi$, while using our bound to ensure that policy values are estimated accurately.

Lastly, we discuss \cite{tsitsiklis1999optimal}, who estimate Q values for finite-horizon Markov optimal stopping problems using a linear combination of basis functions, and then use that to find a threshold policy. They outline a procedure to tune the basis function weights that asymptotically guarantees their policy value's convergence to the best basis-function approximation. Under their assumptions, if we construct a policy class using basis functions, we inherit the useful convergence results relying on their search procedure, along with retaining our finite sample complexity results.

\section{Experiments}
We now demonstrate the setting we consider is sufficiently general to capture several problems of interest and that our approach, \us, can improve performance in optimal stopping problems over some state-of-the-art baselines.

\subsubsection{Ticket Purchase}
Many purchasing problems can be posed as an optimal stopping process where the return from stopping is 
simply the advertised cost. We consider deciding when to purchase an airline ticket for a later trip date in order to minimize cost. The opaque way in which prices are set, and competitive pricing makes this domain difficult to model. Prior work \cite{DBLP:conf/kdd/EtzioniTKY03,DBLP:journals/tist/GrovesG15} has focused on identifying features to create sophisticated models that make good purchase decisions. Surprisingly, it can be hard to improve on an earliest purchase baseline that buys after the first observation.

We use data from Groves and Gini (\citeyear{DBLP:journals/tist/GrovesG15}) who collected real pricing data for a fixed set of routes over a period of 2 years, querying travel sites regularly to collect price information. Each route has several departure dates distributed over the 2 year period. For a price observation sequence of length $T$, a customer could commence his ticket search at any point in the sequence (\emph{e.g.} some customer starts 60 days before departure while another only a week before). Thus, we consider all such commencement points separately to get $T$ distinct full trajectories (similar to \cite{DBLP:journals/tist/GrovesG15}).

We construct a parameterized policy class ($\Pi_{\mathrm{simple}}$) based on Ripper's decision rules in \cite{DBLP:conf/kdd/EtzioniTKY03}: {\sc wait} if ($\mathrm{curr\_price} > \theta_0$ {\sc and} $\mathrm{days\_to\_depart} > \theta_1$) else {\sc buy}, where {\sc buy} corresponds to halting. We also constructed a more complex class ($\Pi_{\mathrm{complex}}$) with 6 parameters, that learns different price thresholds depending on how far the departure date is. We consider nonstop flights on 3 routes, NYC-MSP, MSP-NYC and SEA-IAD, training/testing each separately.  

Our method, \us\ collects full length trajectories during the first 200 days ($\sim$$1000$ trajectories) and uses them to construct a single stopping policy.
It performs a simple policy search by sampling and evaluating $500$ policies randomly from the policy space. It then uses the best identified policy to simulate ticket purchasing decisions  for departure dates occurring during the remaining part of the 2 years ($\sim$$2000$ trajectories). We restrict the data to departure dates that contain at-least 30 price observations.\footnote{We found that shorter trajectories were collected close to the departure date, where prices fluctuate more and for which our illustrative policy classes are inadequate. In such cases, our method adopted a risk-averse earliest purchase policy.} 
 
Results on the test sets are shown in Table~\ref{tab:airline}. Our policy search method succeeds in finding a policy that leads to non-trivial improvement over the difficult earliest purchase baseline. Our improvements are in line with prior approaches specifically designed for this particular domain.\footnote{Unfortunately, the authors were unable to provide us with the train/test split used in \cite{DBLP:journals/tist/GrovesG15}.}
\begin{table}[]
\centering
\caption{Mean expenditure of deploying different policies on the test set for ticket purchase. Earliest purchase buys immediately, latest purchase waits till the departure date.}
\label{tab:airline}
\begin{tabular}{@{}llll@{}}
 \toprule
Method            & \multicolumn{1}{c}{NYC-MSP} & MSP-NYC & SEA-IAD \\ \midrule
Ours ($\Pi_{\mathrm{simple}}$)     & \$355                       & \$374   & \$565   \\
Ours ($\Pi_{\mathrm{complex}}$)    & {\bf \$351}                       & {\bf \$344}   & {\bf \$560}   \\
Earliest purchase & \$380                       & \$383   & \$578   \\
Latest purchase   & \$631                       & \$647   & \$938   \\ \midrule
Best possible price           & \$307                       & \$306   & \$513   \\ \bottomrule
\end{tabular}
\end{table}
These results highlight how our setting can capture important 
purchasing tasks and how our approach, even with a simple policy search, can find policies with significantly better performance 
than competitive domain-specific baselines.

\subsubsection{Tutoring and Asset Replacement}
We now consider 2 simulated domains and compare \us\ to several approaches for learning 
to act quickly in these domains.  Unless specified, all results are averaged over 20 rounds and error bars indicate $95\%$ confidence intervals.

\noindent{\textbf{Baselines.}} One natural idea is to proceed as \us, but use the 
gathered data to build parametric domain models  
that can be used to estimate the performance of 
potential policies. We call these "model-based" approaches. 
A second idea is to consider the initial set of 
collected data as a budget of free exploration, 
and instead use this budget to do Monte Carlo on-policy evaluation 
of a set of policies. 

Of course, doing all exploration, as we do in \us, 
is not always optimal. We also consider a state-of-the-art 
approach for quickly identifying the global optima of a function where the function is initially unknown and each function evaluation is expensive, Bayesian Optimization (BO).
Multiple papers have shown BO can be used to speed online policy search for reinforcement learning tasks \cite{wilson2014using,deisenroth2011pilco}. 
Given the policy class, BO selects a policy to evaluate at each step, and maintains estimates over the expected value of every policy. 
We use Yelp's MOE for BO \cite{YelpMOE} with a 
Gaussian kernel and the popular expected improvement heuristic for picking policies from $\Pi$. The hyper-parameters for BO are picked by a separate optimization to find maximum-likelihood estimates. 

\noindent{\textbf{Simulated Student Learning.}} We first consider a simulated student tutor domain. A number of tutoring systems use mastery teaching, in which a student is provided with practice examples until they are estimated to have mastered the material.  This is an optimal
stopping problem because at each time step, after observing
whether a student got the activity correct or not, the tutor 
can decide whether to halt or continue providing the student
with additional practice. On halting, the student is given the next problem in the sequence; the objective is to maximize the score on this `posttest', while giving as few problems as possible overall.
As is popular in the literature, we model student learning using the Bayesian Knowledge Tracing (BKT) model \cite{DBLP:journals/umuai/CorbettA95}. BKT is a 2-state Hidden Markov Model (HMM) with the state capturing whether the student has mastered the skill or not. Within the HMM, 4 probabilities -- $p_i$ (prior mastery), $p_t$ (transition to mastery), $p_g$ (guess) and $p_s$ (slip) describe the model. To simulate student data, we fix BKT parameters\footnote{Our results hold for other instantiations of these parameters as well. See \cite{DBLP:conf/edm/RitterHNDMT09} for other reasonable parameter settings.} $p_i = 0.18, p_t = 0.2, p_g = 0.2, p_s = 0.1$ and generate student trajectories using this BKT model for $H = 20$ problems. 

For \us, we consider two policy classes, both of which halt when the probability of the student's next response (according to the model in use) being correct crosses a threshold. Thus, we halt if $\Pr(o_t = \mathrm{correct}|o_1,\dots,o_{t-1}) > \theta_0$ where $\theta_0$ is some threshold. In fact, policies of this kind are widely used in commercial tutoring systems \cite{koedinger2013new}. 
If we use the BKT model to implement this policy class, it is parameterized by $(p_i,p_t,p_g,p_s,\theta_0)$. $\Pi$ then contains all possible instantiations of these parameters for our model-free approach to search over. We also consider a policy class based on another popular educational data mining model of student learning: Additive Factors Model (AFM) \cite{draney1995measurement}. AFM is a logistic regression model used to predict the probability that a student will get the next problem correct given their past responses. Thus, $\Pr(o_t = \mathrm{correct}|o_1,\dots,o_{t-1}) = \frac{1}{1 + e^{-(\beta_1 + \beta_2 n_c)}}$ where $n_c$ is the number of correct past attempts.

We first note that \us\ is significantly more effective than taking the same budget of exploration, and using it to evaluate each policy in an on-policy manner using Monte Carlo (MC) estimation. More precisely, we sample $k = 100$ policies from the BKT policy class, and 
fix a budget of $B \in \{100,1000\}$ trajectories. 
\us\ uses $B$ trajectories to evaluates all $k$ policies while MC runs every policy on $\frac{B}{k}$ trajectories (\emph{e.g.} 1 trajectory/policy for $B = 100$) and selects the one with the highest mean performance. Averaging results across 20 separate runs, we found that \us\ identifies a much better policy; MC chose poor policies because it is mislead by the potential performance of a policy due to the limited data.

We also explored the performance of building a model of the domain, both in the setting when the model matches the true domain (here, a BKT model), and a model-mismatch case, where the policy class is based on a student AFM model (which does not match the BKT process dynamics). We use maximum likelihood estimation to fit the assumed model's parameters given the collected data 
and then separately optimize over the threshold parameters $\theta_0$. 
We compare, on varying the budget $B$: (a) \us; (b) model-based; (c) BO. All results are averaged over 50 trials.


\begin{figure}[t]
    \centering
    \begin{subfigure}{0.5\linewidth}
        \centering
        \includegraphics[width=\textwidth]{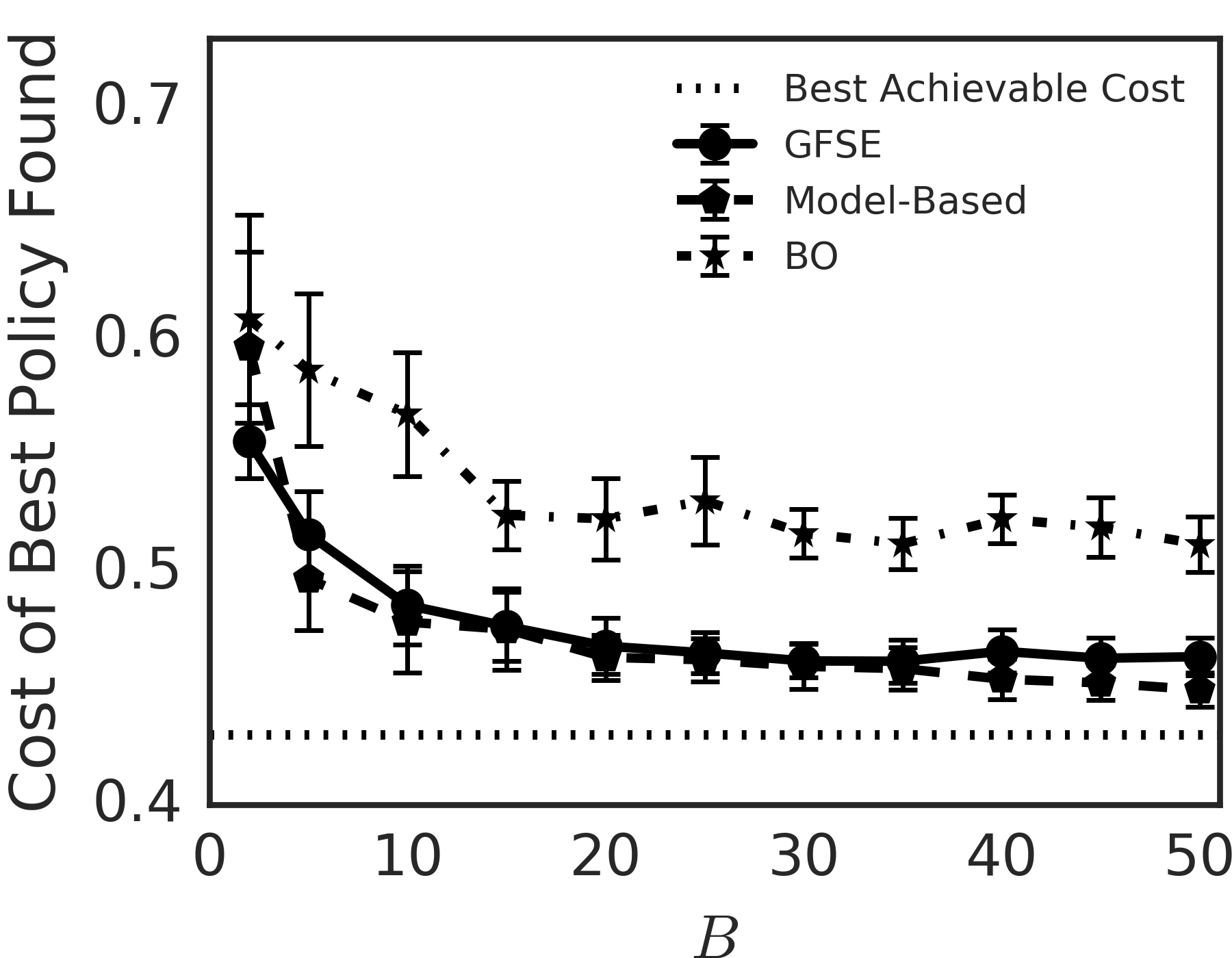}
    \end{subfigure}%
    \begin{subfigure}{0.5\linewidth}
        \centering
        \includegraphics[width=\textwidth]{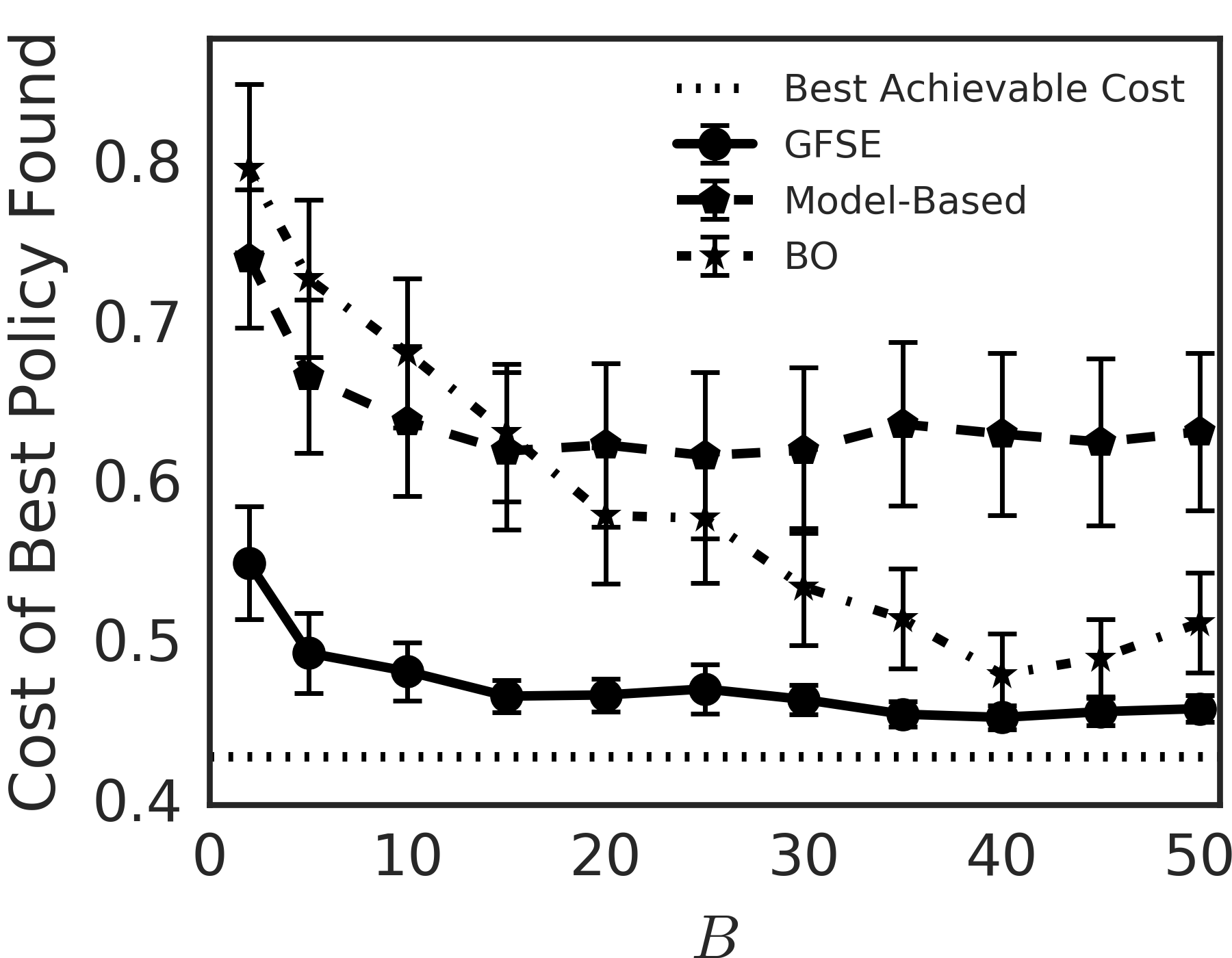}
    \end{subfigure}
    \caption{Comparison of best policy found on varying budget under (a) matched model setting; (b) model mismatch setting.}
    \label{fig:tutoring_comparison}
\end{figure}

The results of this experiment are shown in Figure~\ref{fig:tutoring_comparison}. Our approach does well in both settings, quickly finding a near optimal policy. As one would expect, the model-based approach does well under the matched model setting, making full use of the knowledge of the underlying process dynamics. However, on fitting the mismatched AFM model, the model-based approach suffers. As has been noted by prior work \cite{mandel2014offline} 
model-fitting procedures focus on maximizing the likelihood of the observed data rather than trying to directly identify a policy that is expected to perform well. BO can find a good policy, but takes more samples to do so. 



\begin{figure}[t]
    \centering
    \begin{subfigure}{0.5\linewidth}
        \centering
        \includegraphics[width=\textwidth]{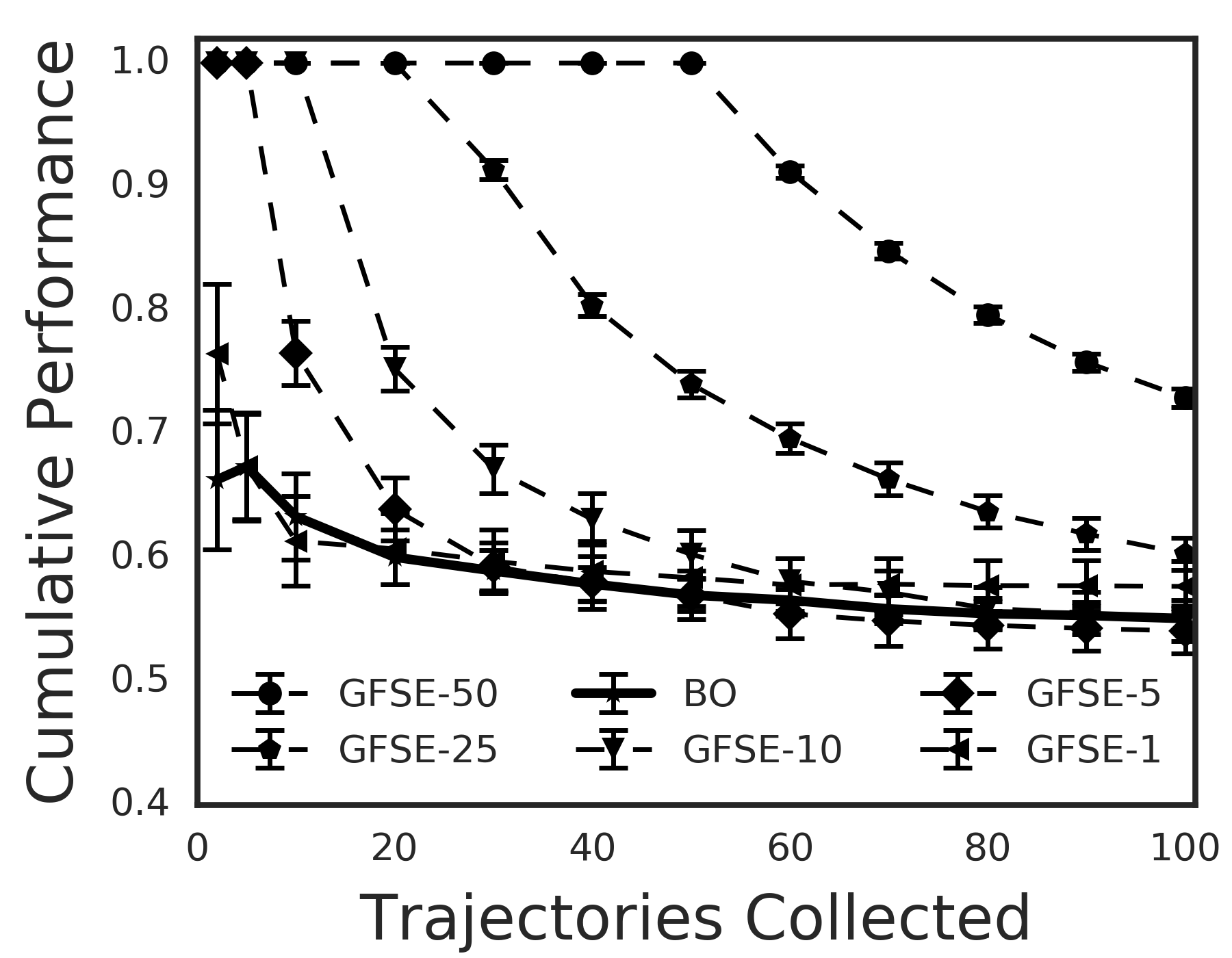}
    \end{subfigure}%
    \begin{subfigure}{0.5\linewidth}
        \centering
        \includegraphics[width=\textwidth]{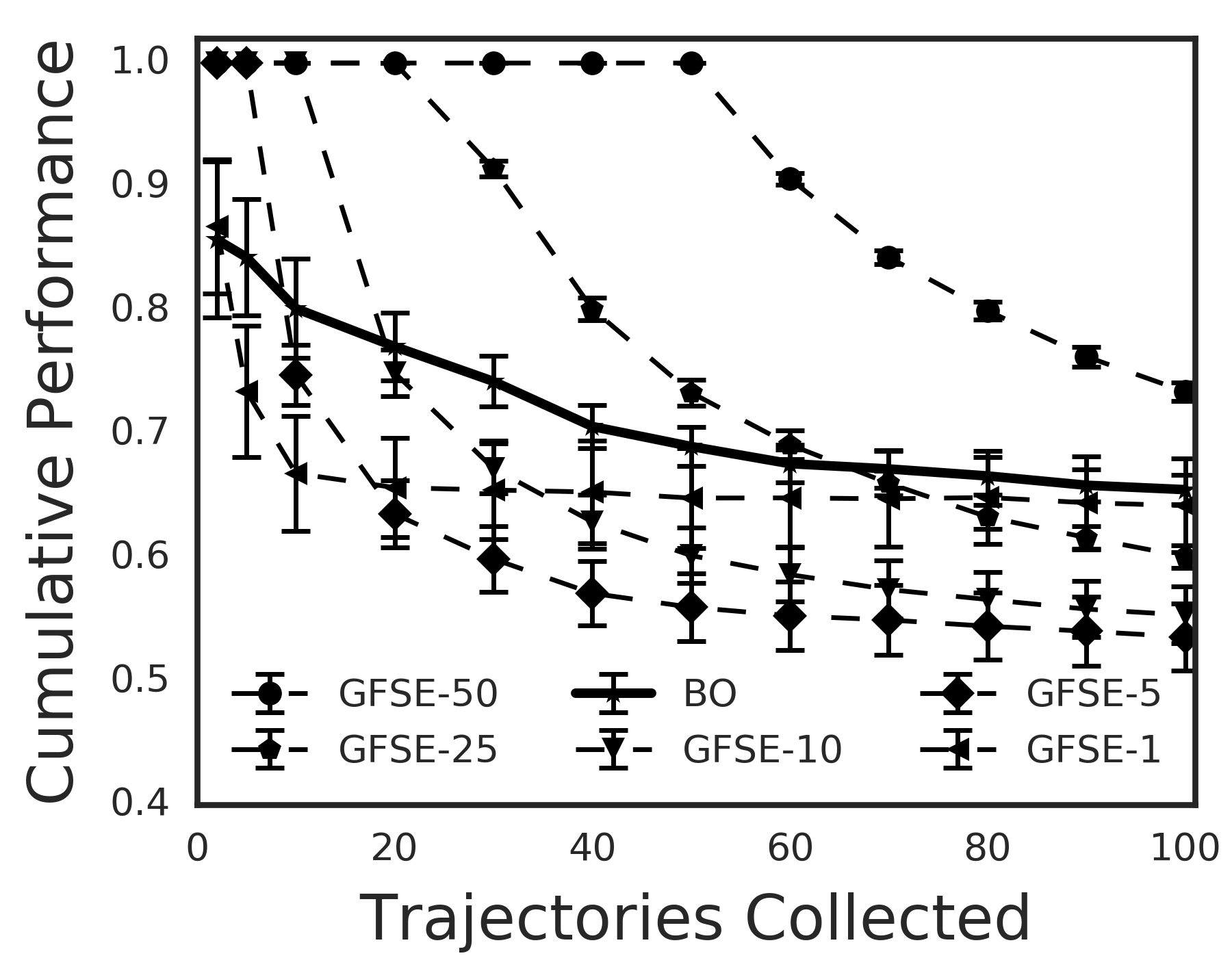}
    \end{subfigure}
    \caption{Average cumulative performance on the simulated student domain. (a) matched model; (b) model-mismatch. GFSE-$k$ collects $k$ initial full-length trajectories before identifying and then executing the best policy. }
    \label{fig:cume_tutoring_comparison}
\end{figure}

Since BO is an online approach whereas \us\ uses a fixed budget of exploration, we also compare the averaged cumulative performance of BO to variants of \us\ in Figure~\ref{fig:cume_tutoring_comparison}. This mimics a scenario where we care about online performance on every individual trajectory, rather than having access to a fixed budget before deploying a policy. For our method, we can choose to collect more or less full trajectories before finding the best policy. Interestingly, if we use 5 trajectories as the initial budget to collect full length trajectories, \us\ meets or exceeds BO performance in this setting in both the matched and mismatched model cases, within 10 trajectories. 

BO suffers from the highly stochastic returns of policies in this setting. For more efficient data reuse, we also consider a variant of BO (BO-REuse) where we evaluate each proposed policy online and also using previously collected trajectories, yielding a more robust estimate of the policy's performance. Similarly, for \us\ we deploy a policy using an initial budget of trajectories, and then use its on-policy trajectory (in addition to earlier trajectories) to rerun policy search and identify another policy for the next time step (\us-RE). Figure~\ref{fig:augment_bo} shows this improved both methods (especially in the mismatch case), with our approaches still performing best. 

\begin{figure}[t]
    \centering
    \begin{subfigure}{0.5\linewidth}
        \centering
        \includegraphics[width=\textwidth]{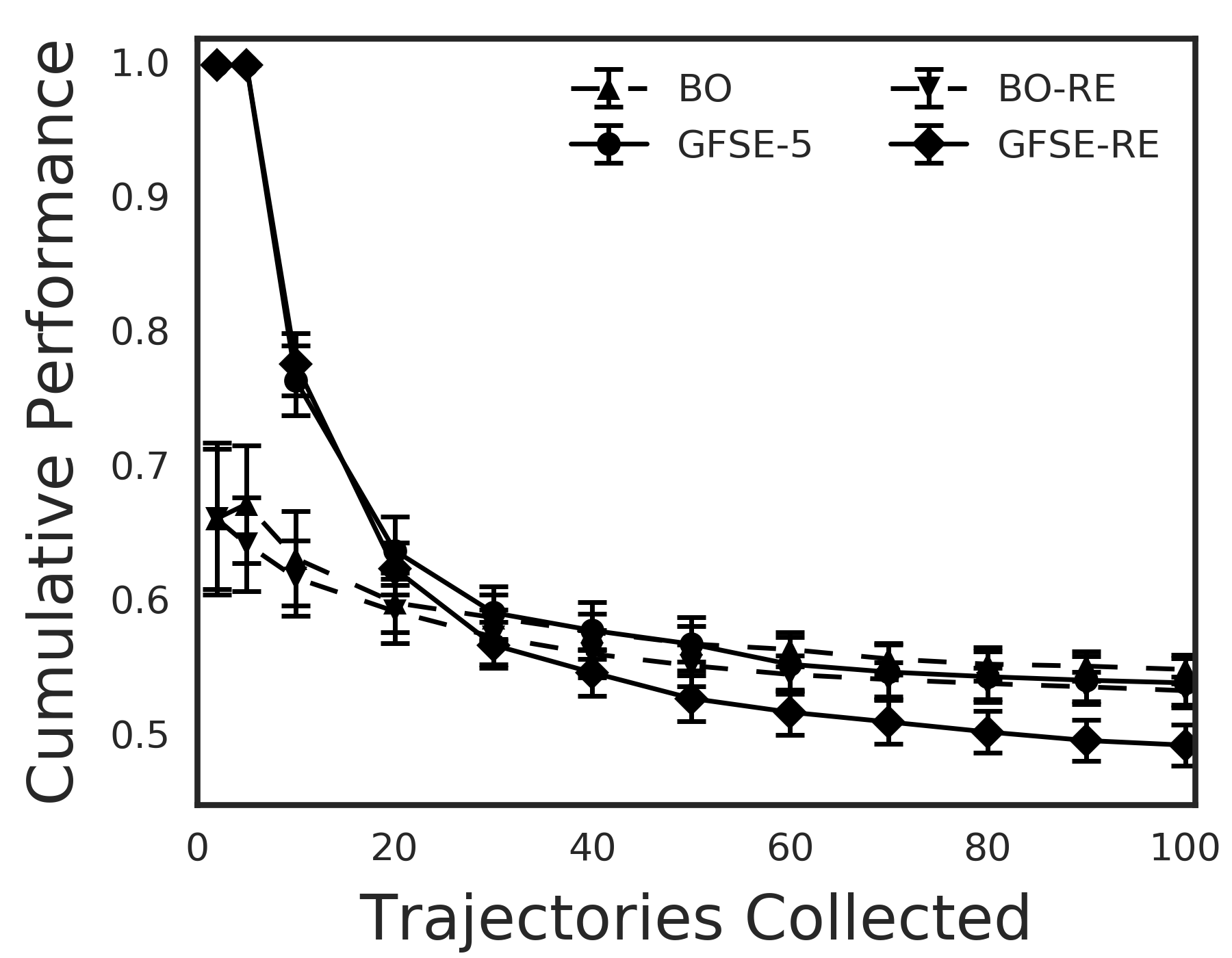}
    \end{subfigure}%
    \begin{subfigure}{0.5\linewidth}
        \centering
        \includegraphics[width=\textwidth]{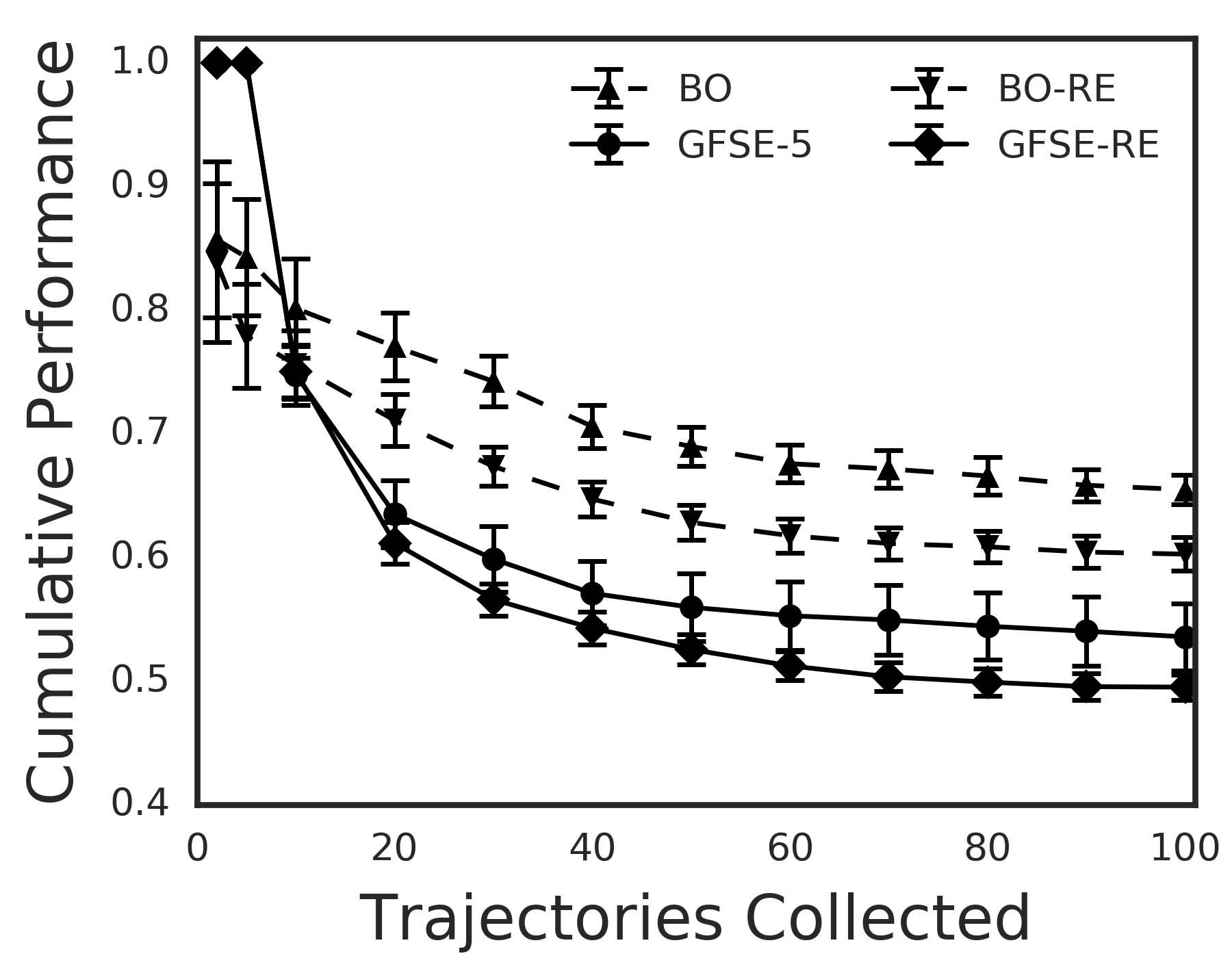}
    \end{subfigure}
    \caption{Cumulative performance of augmented methods on (a) matched model; (b) model-mismatch.}
    \label{fig:augment_bo}
\end{figure}

\noindent{\textbf{Asset Replacement}}. Another natural problem that falls into our optimal stopping problem is when to replace a depreciating asset (such as a car, machine, etc). For simulation, we use a model described in \cite{DBLP:journals/ior/JiangP15}. Variants of this model are widely used in that field \cite{feldstein1974towards,rust1987optimal}. In the model, observations are $d$ dimensional vectors of the form $(X,Y_1,\dots,Y_{d-1})$. Each asset starts at a fixed valuation $X = X_{\mathrm{max}}$ which depreciates stochastically\footnote{Details of the model can be found in \cite{DBLP:journals/ior/JiangP15}.} while emitting observations $Y_1,\dots,Y_{d-1}$ at every time step. The reward function used incorporates the cost of replacement (which increases over time), the utility derived from the asset and a penalty if the asset becomes worthless before replacement. We use $d=3$ for experiments.

We construct a logistic threshold policy class; replacing the asset if $\frac{1}{1 + e^{-(\beta_1 + \beta_2 \cdot \mathrm{depr})}} > \beta_3$ where $\mathrm{depr}$ is the total depreciation from $X_{\max}$ seen so far (normalized to lie in $[0,1]$). In addition to the approaches seen before, we also include baseline policies that choose to (i) replace the asset immediately; (ii) never replace. Lastly, we include the optimal value (known only in hindsight) for reference.

The results are shown in Fig~\ref{fig:opt_repl}. Surprisingly, our method outperforms competing methods by a considerable margin. It appears that our chosen policy class is tricky to optimize over: most policies in the space perform poorly. For 500 random policies chosen from this space, the mean cost is around 240 with a $95\%$ confidence interval of only 16. However, the domain itself is not very noisy, with robust value estimation requiring less than 5 trajectories (see Figure~\ref{fig:opt_repl}). This enables our method to consistently find a good policy even with a low budget: one that corresponds to replacing the asset when depreciation is around $50\%$. BO 
improves slowly; either sampling bad policies due to the sparse nature of the space, or disbelieving the estimate of a good policy due to the bad policies surrounding it. Manually adjusting the BO  hyperparameters to account for this did not improve performance significantly. 

\begin{figure}[t]
    \centering
    \begin{subfigure}{0.5\linewidth}
        \centering
        \includegraphics[width=\textwidth]{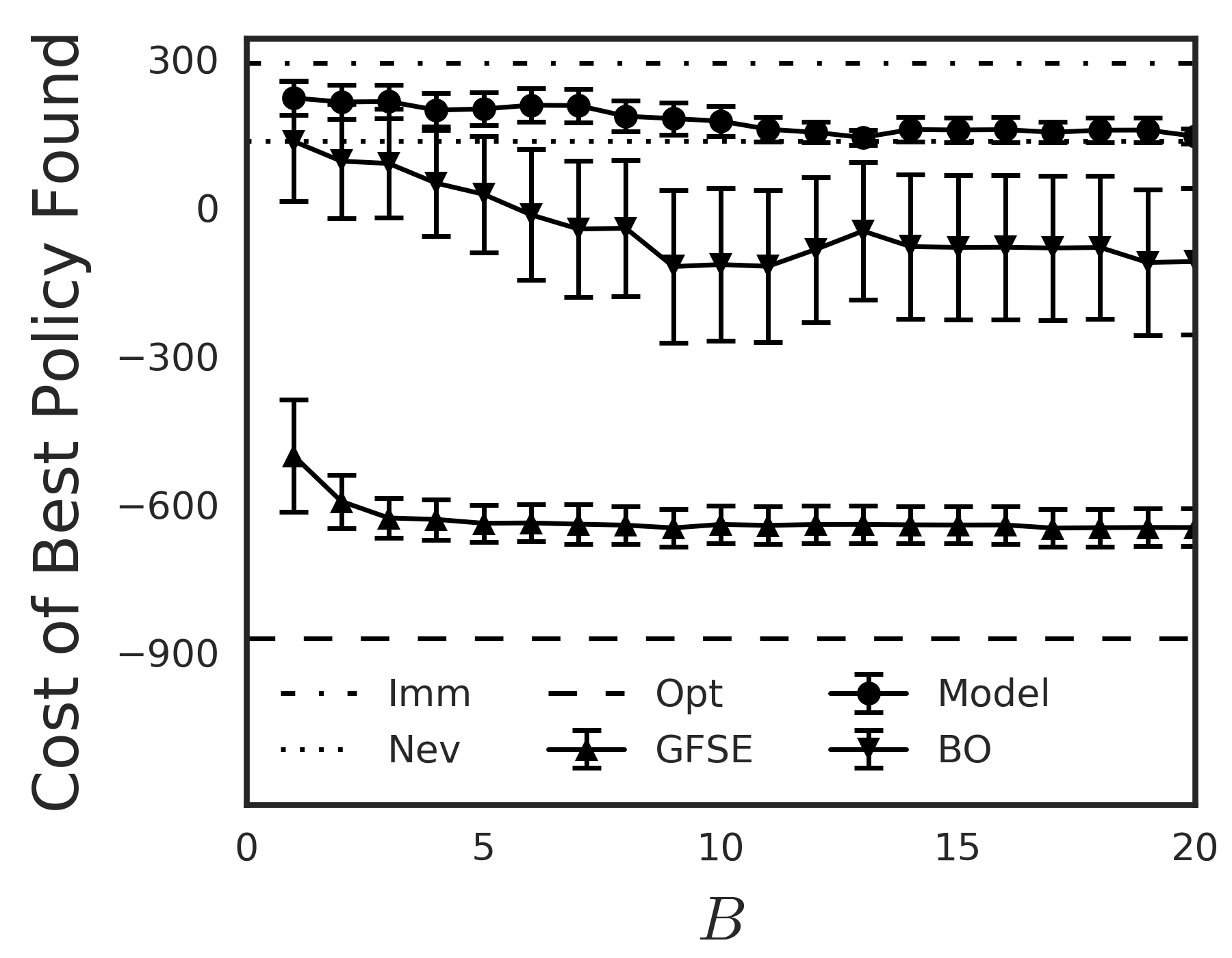}
    \end{subfigure}%
    \begin{subfigure}{0.5\linewidth}
      \centering
        \includegraphics[width=\textwidth]{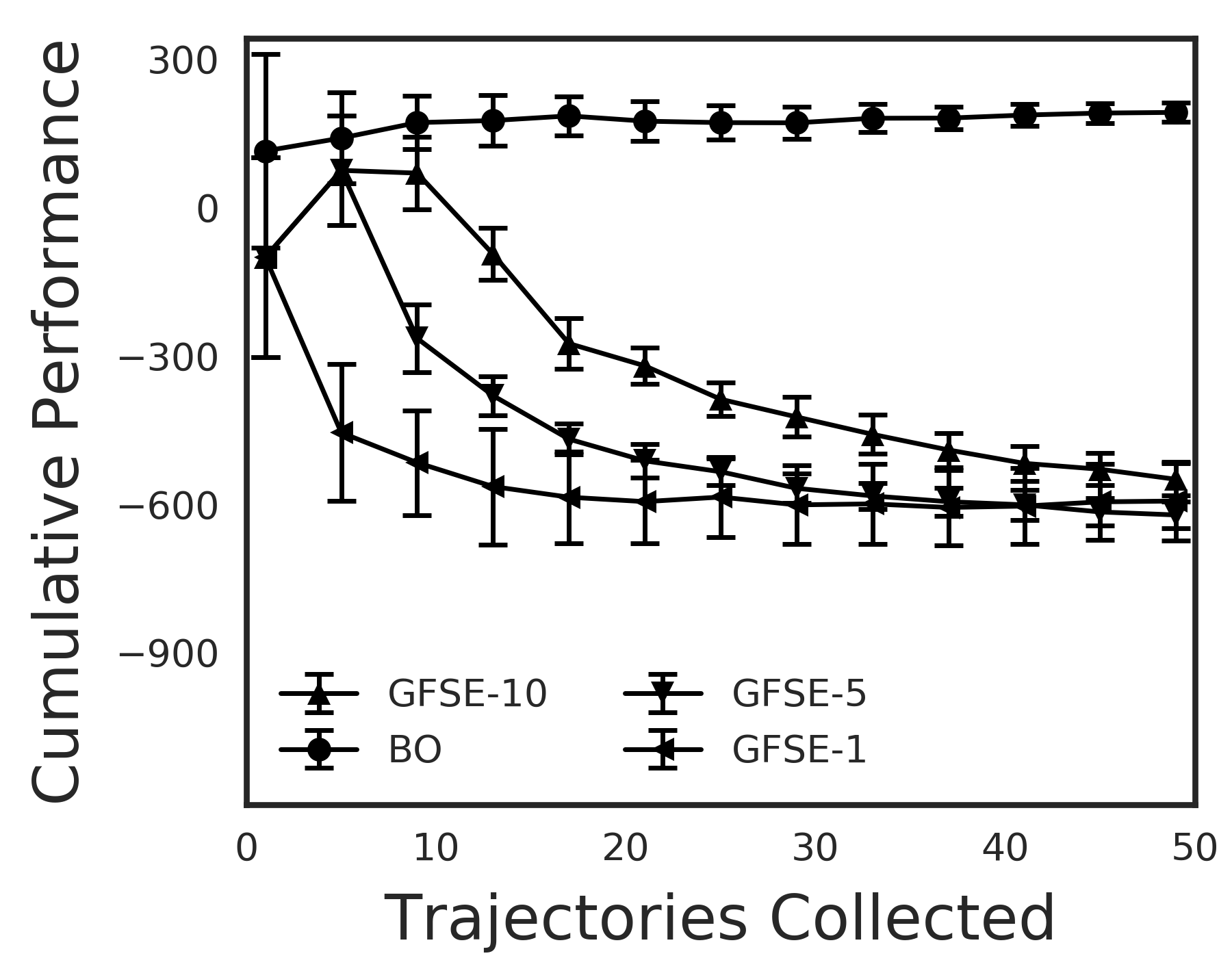}
    \end{subfigure}
    \caption{Results from asset replacement.}
    \label{fig:opt_repl}
\end{figure}

 \section{Discussion and Conclusion}
\us\ performed well, outperforming state-of-the-art algorithms and common baselines, in a variety of simulations of important domains. 
While we randomly searched over policies in relatively simple policy classes for illustration, more sophisticated search methods and policy classes could be employed, without effecting the theoretical guarantees we derived. Another extension is in using shorter trajectories that terminate before the horizon for policy evaluation (similar to how full trajectories are used). This is useful in a scenario where we get trajectories on-policy using the best policy found by \us. We can then rerun policy search with all trajectories (full length or short) collected so far. 
Our policy value estimates will be biased in this case, since only a policy that halts earlier than a shorter trajectory can use it for evaluation. Values for policies that halt later may be overestimated (higher variance of estimation due to fewer trajectories), biasing us to pick them. If the number of evaluations per policy exceeds the number in Theorem~\ref{theorem:pac}, our estimates would remain within $\epsilon$ of the true values (with high probability), which would minimize the effect of this bias. As we saw from Figure~\ref{fig:augment_bo}, this (\us-RE) works well empirically.

To summarize, we introduced a method for learning to act in optimal stopping problems, which reuses full length trajectories to perform policy search. Our theoretical analysis and empirical simulations demonstrate that this simple observation can lead to benefits in sample complexity and practice.


\section{Acknowledgments}
We appreciate the financial support of a NSF BigData award \#1546510, a Google research award and a Yahoo gift. 

\bibliographystyle{named} 
\bibliography{bibfile}

\end{document}